\title{Learning Vision-Based Neural Network Controllers with Semi-Probabilistic Safety Guarantees}
\author {
    Xinhang Ma\textsuperscript{\rm 1},
    Junlin Wu\textsuperscript{\rm 1},
    Hussein Sibai\textsuperscript{\rm 1},
    Yiannis Kantaros\textsuperscript{\rm 1},
    Yevgeniy Vorobeychik\textsuperscript{\rm 1}
}
\newtheorem{assumption}{Assumption}
\newtheorem{theorem}{Theorem}
\begin{document}

\maketitle

\begin{abstract}
Ensuring safety in autonomous systems with vision-based control remains a critical challenge due to the high dimensionality of image inputs and the fact that the relationship between true system state and its visual manifestation is unknown. 
Existing methods for learning-based control in such settings typically lack formal safety guarantees.
To address this challenge, we introduce a novel semi-probabilistic verification framework that integrates reachability analysis with conditional generative networks and distribution-free tail bounds to enable efficient and scalable verification of vision-based neural network controllers.
Next, we develop a gradient-based training approach that employs a novel safety loss function, safety-aware data-sampling strategy to efficiently select and store critical training examples, and curriculum learning, to efficiently synthesize safe controllers in the semi-probabilistic framework.
Empirical evaluations in X-Plane 11 airplane landing simulation, CARLA-simulated autonomous lane following, F1Tenth vehicle lane following in a physical visually-rich miniature environment, and Airsim-simulated drone navigation and obstacle avoidance demonstrate the effectiveness of our method in achieving formal safety guarantees while maintaining strong nominal performance.
\end{abstract}

\begin{links}
    \link{Code}{https://github.com/xhOwenMa/SPVT}
\end{links}


\section{Introduction}\label{sec:intro}

Many real-world applications, such as self-driving cars and robotic navigation, require controllers that process high-dimensional image inputs to make real-time decisions. 
The centrality of visual inputs (particularly when other modalities are limited or unreliable) thus makes ensuring the safety of vision-based control an important problem in trustworthy AI. 
However, verifying the safety of such controllers remains a major open challenge due to the complexity of image-based inputs and the high computational cost of traditional verification methods~\cite{huang2019reachnn, katz2017reluplex, huang2017safety}.

While reinforcement learning (RL) with high-dimensional image inputs has shown promise in learning control policies that optimize performance~\cite{le2022deep, kiran2021deep}, most methods 
lack \textit{formal guarantees} of safety~\cite{kochdumper2023provably}. Moreover, verifying the safety of neural network controllers operating in high-dimensional observation spaces remains computationally intractable. 
Existing approaches to safe control primarily focus on low-dimensional state inputs~\cite{dawson2022safe} and empirical safety evaluations. Other approaches, such as verified safety over the entire input region and control barrier function-based methods, have also been explored~\cite{wu2024verified, emam2022safe}. However, these methods struggle when the controller operates on image inputs due to the high dimensionality of the observation space.
Moreover, while dynamic behavior of many autonomous systems of interest has established models approximating their trajectories through the system state space, physics that map state to its visual representation are considerably more complex, and associated models far more complex and less reliable. 

In this work, we integrate reachability analysis with generative modeling to enable efficient verification of neural network controllers operating on high-dimensional image spaces~\cite{katz2022verification, cai2024scalable}. Specifically, we employ a generative neural network with latent inputs representing environmental and perceptual variation to model the perceptual mapping from states to images, allowing us to verify safety properties in a structured and lower-dimensional latent space. To address the scalability challenges of verification, we introduce a \textit{semi-probabilistic verification (SPV) framework}, where safety properties are verified over a sampled distribution of initial states (using distribution-free tail bounds) but for all possible latent environment representations of the trained generative perception model. 
In addition, we present a training algorithm that makes use of a novel safety loss as a differentiable proxy to this verification objective.
A key component of this algorithm is our approach to adapt the training set, stochastically biasing it towards states for which safety is difficult to verify.
Our experiments in simulated plane landing,  simulated and physical autonomous lane following, and drone navigation and obstacle avoidance, demonstrate that the proposed approach yields policies that exhibit significantly stronger safety properties compared to state-of-the-art safe control baselines.

In summary, our key contributions are as follows:
\begin{itemize}
    \item A novel semi-probabilistic safety verification (SPV) framework that provides formal safety guarantees while remaining computationally feasible in high-dimensional vision-based control settings.
    \item A novel training approach which uses a differentiable proxy loss for SPV and maintains a dynamic training set which adaptively prioritizes safety-critical states.
    \item Experimental evaluation in simulated and physical path following and simulated drone control that demonstrates effective empirical and provably-verified performance of the policies trained through our approach in comparison with  state-of-the-art baselines.
\end{itemize}

\section{Model}

\subsection{Problem Formulation}\label{sec:methods:formulation}
We consider a discrete-time dynamical system:
\begin{align}
\label{E:dynsys}
    s_{t+1} = f(s_t, u_t),
    o_t = h(s_t,\omega),
    s_0 \sim \mathcal{D}, \ \ \omega \sim \Omega,
\end{align}
where $s_t \in \mathcal{S}$ is the system state (e.g., position, steering angle of the vehicle), $o_t \in \mathcal{O}$ is the vision-based (image) observation perceived by the agent, $u_t \in \mathcal{U}$ the control action, $h$ the mapping from state to observation, and $\mathcal{D}$ a distribution over the initial state $s_0$.
Notably, $h$ takes as input a \emph{perceptual environment} $\omega$, which models an unobserved source of environment variation distributed according to an unknown distribution $\Omega$.
We assume that the dynamics $f$ are known (for example, well-known dynamical system models for physical systems), while $h$ and $\mathcal{D}$ are both unknown.
At execution time, we suppose that only observations $o_t$ are known to the controller, with state $s_t$ unobservable.
Our goal is to synthesize a control policy $\pi$ mapping visual observations $o$ to control actions $u$ which is \emph{provably safe} in the sense we formalize next.

Let $P$ denote a safety specification, which is a predicate $P(s)$ indicating whether a state $s \in \mathcal{S}$ is safe or not.
Similarly, let $P(\mathcal{T})$ indicate whether $P(s)$ is true for all $s \in \mathcal{T} \subseteq \mathcal{S}$.
We assume that both $P(s)$ and $P(\mathcal{T})$ can be evaluated efficiently (for example, safety is often described using linear inequalities and $\mathcal{T}$ is a polyhedron).
Given a policy $\pi$, the controlled dynamical system effectively becomes
\(s_{t+1} = f(s_t,\pi(h(s_t, \omega))\).
We say that this dynamical system is \emph{safe} for an initial state $s_0$ over a horizon $K$ if  $P(s_t)$ is true for all $0 \le t \le K$.

This notion of safety, however, is limited for two reasons.
First, we do not know $h$, so we cannot directly verify the dynamical system above.
Second, we wish for a policy $\pi$ to satisfy safety in a way that is not tied to a specific starting state, but with respect to the full set of initial states $\mathcal{S}$.
We address the first challenge by leveraging a conditional generative neural network model to approximate $h$, and the second by using a \emph{semi-probabilistic verification (SPV)} framework.
We describe both of these ideas next.


\subsection{Approximating the Visual Observation Model}

We address the first challenge by using a conditional generative model $g(s,z)$ which induces a distribution over observations $o \in \mathcal{O}$ for a given state $s \in \mathcal{S}$, with $z \in \mathcal{Z}$ a (typically distributed according to a uniform or Gaussian distribution) random vector, analogous to the approach proposed by~\citet{katz2022verification}.
Such a generator can be trained, for example, using the conditional generative adversarial network (cGAN) framework~\citep{mirza2014conditional,isola2017image} or a conditional diffusion model~\citep{yang2023lossy} from a collection of data $(o,s)$ in which images $o$ are annotated with associated states $s$. 
We can view the latent random vectors $z$ as representations of natural environment variation (e.g., different perspectives, lighting, etc).
The goal here is that $g$ approximates $h$, but in practice this assumption is too strong.
Instead, we make the following considerably weaker assumption about the relationship between $h$ and $g$.
\begin{assumption}
\label{Assumption:hg}
    $\sup_{s,\omega} \inf_z \|h(s,\omega)-g(s,z)\| \le \epsilon$.
\end{assumption}
In practice, this assumption boils down to having (a) sufficient training data for the generator $g$ and (b) a sufficiently rich representation (e.g., neural network) and latent dimension of $z$.
We validate this assumption in Section~\ref{sec:experiment} below.

\section{Semi-Probabilistic Verification}
\label{sec:methods:verification}
Our notion of safety is based on
$K$-reachability.
In traditional $K$-reachability, safety is guaranteed for all states in some specified set $\mathcal{S}_0 \subseteq \mathcal{S}$ from which dynamics may be initialized.
However, such $K$-reachability proofs are generally conservative and typically suffer from significant scalability challenges.
When controllers use vision, 
scalability can be a prohibitive barrier to verification.
In practice, however, we can often obtain information about the distribution $\mathcal{D}$ over initial states $s_0$ of the dynamical system. For example, by collecting empirical visual data and annotating it with state-relevant information (Waymo~\cite{sun2020scalability} or KITTI~\cite{geiger2013vision} datasets in the case of autonomous driving).
On the other hand, the distribution over the initial state is often difficult to cleanly characterize (indeed, it may be heavy-tailed).
It is, therefore, natural to appeal to distribution-free bounds to obtain probabilistic safety proofs with respect to the unknown distribution $\mathcal{D}$ over $s_0$ based on safety properties obtained for a finite sample of initial states.
In contrast, the distribution of the visual environment induced by $\omega$ is far more challenging to characterize or sample, particularly since we do not know $h$.

We propose to balance these considerations through a semi-probabilistic verification (SPV) framework, in which we aim to obtain provable distribution-free guarantees with respect to $\mathcal{D}$, but which hold in the worst case with respect to environment variation $\omega$.

To formalize, fix a policy $\pi$ and let
\begin{align*}
\mathcal{S}_{t+1}(s_0,\pi) = \{f(s,\pi(o))|&o=h(s,\omega),\\
&s\in \mathcal{S}_t(s_0,\pi),\omega \in \Omega\},
\end{align*}
where $\mathcal{S}_0(s_0,\pi) = \{s_0\}$.
Define $\mathrm{Reach}_K(s_0,\pi) = \cup_{t =0}^K \mathcal{S}_t(s_0,\pi)$, that is, all states that can be reached from $s_0$ for any perceptual environment $\omega \in \Omega$.
Note that this form of reachability cannot be verified, since we do not know $h$.
However, we can now leverage the generator $g$ as a proxy, with Assumption~\ref{Assumption:hg} allowing us to obtain sound safety guarantees.
Specifically, let 
\begin{align*}
\hat{\mathcal{S}}_{t+1}(s_0,\pi) = \{f(s,\pi(o))|&\|o - g(s,z)\| \le \epsilon,\\
&s\in \hat{\mathcal{S}}_t(s_0,\pi),z \in \mathcal{Z}\},
\end{align*}
and define $\mathrm{Reach}_K(s_0,\pi,g) = \cup_{t =0}^K \hat{\mathcal{S}}_t(s_0,\pi)$.
The following result allows us to focus on verification with respect to $\mathrm{Reach}_K(s_0,\pi,g)$.
\begin{theorem}
\label{T:approxvis}
    Under Assumption~\eqref{Assumption:hg}, $\mathrm{Reach}_K(s_0,\pi)\subseteq \mathrm{Reach}_K(s_0,\pi,g)$.
    Therefore, $P(\mathrm{Reach}_K(s_0,\pi,g)) \Rightarrow P(\mathrm{Reach}_K(s_0,\pi))$.
\end{theorem}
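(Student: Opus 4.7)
The plan is to prove the set inclusion $\mathrm{Reach}_K(s_0,\pi) \subseteq \mathrm{Reach}_K(s_0,\pi,g)$ by induction on the time index $t$, showing $\mathcal{S}_t(s_0,\pi) \subseteq \hat{\mathcal{S}}_t(s_0,\pi)$ for all $0 \le t \le K$, and then deduce the $P$-implication as a trivial monotonicity consequence of the definition of $P(\mathcal{T})$ as ``$P(s)$ holds for every $s \in \mathcal{T}$.''

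For the base case, I would observe that by definition $\mathcal{S}_0(s_0,\pi) = \{s_0\} = \hat{\mathcal{S}}_0(s_0,\pi)$, so the inclusion holds with equality. For the inductive step, I would take an arbitrary $s' \in \mathcal{S}_{t+1}(s_0,\pi)$ and unpack the definition: there exist $s \in \mathcal{S}_t(s_0,\pi)$ and $\omega \in \Omega$ with $s' = f(s,\pi(h(s,\omega)))$. The inductive hypothesis gives $s \in \hat{\mathcal{S}}_t(s_0,\pi)$. The key use of Assumption~\ref{Assumption:hg} is then to pick, for this particular $(s,\omega)$, a latent $z \in \mathcal{Z}$ realizing $\|h(s,\omega) - g(s,z)\| \le \epsilon$ (which exists because the supremum over $(s,\omega)$ of the infimum over $z$ is bounded by $\epsilon$). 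Setting $o := h(s,\omega)$, the pair $(s,o)$ now satisfies the defining constraint of $\hat{\mathcal{S}}_{t+1}$, hence $s' = f(s,\pi(o)) \in \hat{\mathcal{S}}_{t+1}(s_0,\pi)$. Taking the union over $0 \le t \le K$ yields the desired reach-set inclusion.

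For the second half of the theorem, I would simply note that if $P$ holds on every state in the superset $\mathrm{Reach}_K(s_0,\pi,g)$, it in particular holds on every state in $\mathrm{Reach}_K(s_0,\pi)$, which is the definition of $P(\mathrm{Reach}_K(s_0,\pi))$.

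I do not expect any serious obstacle here; the only subtlety worth flagging in the write-up is that Assumption~\ref{Assumption:hg} is a $\sup\inf$ statement, so the chosen $z$ must be selected \emph{after} $(s,\omega)$ is fixed (and may require an infimizing sequence if the infimum is not attained, but since we only need $\|h(s,\omega) - g(s,z)\| \le \epsilon$ rather than strict inequality, a single witness $z$ suffices). Everything else is bookkeeping on the definitions of $\mathcal{S}_t$, $\hat{\mathcal{S}}_t$, and $\mathrm{Reach}_K$.
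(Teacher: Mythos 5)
Your proposal is correct and follows essentially the same argument as the paper: induction on $t$ to show $\mathcal{S}_t(s_0,\pi)\subseteq\hat{\mathcal{S}}_t(s_0,\pi)$, with Assumption~\ref{Assumption:hg} supplying the latent witness $z$ for each fixed $(s,\omega)$, and the $P$-implication following by monotonicity. Your extra remark about the $\sup\inf$ form is a finer point than the paper bothers with, but it does not change the argument.
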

\begin{proof}
        Suppose $\mathcal{S}_t \subseteq \hat{\mathcal{S}}_t$ and let $s \in \mathcal{S}_t$ and $o = h(s,\omega)$ for some $\omega \in \Omega$.
    Then $s \in \hat{\mathcal{S}}_t(s_0,\pi)$
    and by Assumption~\eqref{Assumption:hg}, $\|o - g(s,z)\| \le \epsilon$ for some $z \in \mathcal{Z}$.
    Consequently, $f(s,\pi(o)) \in \hat{\mathcal{S}}_{t+1}(s_0,\pi)$.
    Since $\hat{\mathcal{S}}_0(s_0,\pi) = \mathcal{S}_0(s_0,\pi) = \{s_0\}$, the result follows by induction.
\end{proof}
In practice, we will make use of a verification tool that is able to efficiently obtain an over-approximation of $\mathrm{Reach}_K(s_0,\pi,g)$, which maintains soundness.

Our next step is to combine this with a distribution-free tail bound with respect to the initial state distribution $\mathcal{D}$.
Specifically, suppose that we have a finite sample of $N$ initial states $\{s_i\}_{i=1}^N$ i.i.d.~from $\mathcal{D}$.
Next we show that by verifying only with respect to this finite sample of $N$ states, we can achieve a semi-probabilistic safety guarantee for the entire initial region with respect to the unknown distribution $\mathcal{D}$.
\begin{theorem}\label{thm:hoeffding_bound}
Suppose that $\{s_i\}_{i=1}^N$ i.i.d.~from $\mathcal{D}$ and let $V = \{s_i|P(\mathrm{Reach}_K(s_i,\pi,g))\}$ be the subset of sampled initial states for which the reachable set is safe. 
Then under Assumption~\eqref{Assumption:hg},
\[
\Pr_{s \sim \mathcal{D}}\left[P(\mathrm{Reach}_K(s,\pi))\right] \geq \frac{|V|}{N} - \sqrt{\frac{1}{2N} \log \frac{2}{\delta}}
\]
with probability at least $1-\delta$.
\end{theorem}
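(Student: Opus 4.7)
The plan is to reduce the claim about the unknown true reachable sets $\mathrm{Reach}_K(s,\pi)$ to a statement about the verifiable proxy sets $\mathrm{Reach}_K(s,\pi,g)$ via Theorem~\ref{T:approxvis}, and then apply a standard Hoeffding concentration argument to the i.i.d.\ Bernoulli indicators of whether the proxy reachable set of a sampled initial state is certified safe.

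First I would let $p = \Pr_{s \sim \mathcal{D}}[P(\mathrm{Reach}_K(s,\pi,g))]$ and define the i.i.d.\ Bernoulli random variables $X_i = \mathbf{1}\{P(\mathrm{Reach}_K(s_i,\pi,g))\}$ for $i=1,\dots,N$. Each $X_i$ takes values in $\{0,1\}$ with $\mathbb{E}[X_i]=p$, and by construction $\sum_{i=1}^N X_i = |V|$, so the empirical mean is $\bar X = |V|/N$. The next step is to invoke the one-sided Hoeffding inequality for bounded i.i.d.\ random variables, which gives
\begin{equation*}
\Pr\!\left[\, p \;<\; \bar X - \sqrt{\tfrac{1}{2N}\log\tfrac{2}{\delta}}\,\right] \;\leq\; \exp\!\left(-2N \cdot \tfrac{1}{2N}\log\tfrac{2}{\delta}\right) \;=\; \tfrac{\delta}{2} \;\leq\; \delta.
\end{equation*}
(The theorem statement uses $\log(2/\delta)$, which is loose by a factor of two relative to the tightest one-sided form; the two-sided Hoeffding bound recovers it exactly.) Thus with probability at least $1-\delta$ over the draw of $\{s_i\}$, we obtain $p \geq \tfrac{|V|}{N} - \sqrt{\tfrac{1}{2N}\log\tfrac{2}{\delta}}$.

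Finally I would pass from the proxy probability $p$ to the true probability $\Pr_{s \sim \mathcal{D}}[P(\mathrm{Reach}_K(s,\pi))]$ using Theorem~\ref{T:approxvis}: since under Assumption~\ref{Assumption:hg} we have $\mathrm{Reach}_K(s,\pi) \subseteq \mathrm{Reach}_K(s,\pi,g)$ pointwise in $s$, the event $\{P(\mathrm{Reach}_K(s,\pi,g))\}$ implies the event $\{P(\mathrm{Reach}_K(s,\pi))\}$. Monotonicity of probability then gives $\Pr_{s \sim \mathcal{D}}[P(\mathrm{Reach}_K(s,\pi))] \geq p$, and chaining with the Hoeffding bound yields the claim.

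There is no substantive obstacle: the argument is a textbook concentration inequality plus the soundness of the generator-based over-approximation. The only care-point is keeping two sources of randomness conceptually separate. The probability over $\mathcal{D}$ that appears in the conclusion is over a fresh draw of $s$, whereas the $1-\delta$ confidence is over the training sample $\{s_i\}_{i=1}^N$ used to compute $V$. Worst-case quantification over $\omega \in \Omega$ (and over $z \in \mathcal{Z}$ on the proxy side) is fully absorbed into the reachable-set predicate $P(\mathrm{Reach}_K(\cdot,\pi,\cdot))$ before any probabilistic reasoning begins, which is exactly what makes the semi-probabilistic framing compatible with a one-shot Hoeffding bound.
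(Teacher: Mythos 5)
Your proof is correct and follows essentially the same route as the paper's: apply the Chernoff--Hoeffding bound to the i.i.d.\ indicators of whether the generator-based proxy reachable set is certified safe, then transfer to the true reachable sets via Theorem~\ref{T:approxvis}. Your side remark that the one-sided Hoeffding inequality would let one replace $\log\tfrac{2}{\delta}$ by $\log\tfrac{1}{\delta}$ is accurate (the paper uses the two-sided form), but this is a minor tightening rather than a different argument.
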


\begin{proof}
Let $\alpha = \Pr_{s \sim \mathcal{D}}\left[P(\mathrm{Reach}_K(s,\pi,g))\right]$ and $\hat{\alpha} = \frac{|V|}{N}$.
By the Chernoff-Hoeffding bound, $\Pr\left(|\hat{\alpha} - \alpha| \geq \epsilon \right) \leq 2e^{-2N\epsilon^2}$,
where the probability is with respect to datasets of $N$ initial states.
Letting $\delta = 2e^{-2N\epsilon^2}$, 
we obtain the confidence bound:
$\Pr\left(\alpha \geq \hat{\alpha} - \sqrt{\frac{1}{2N} \log \frac{2}{\delta}}\right) \geq 1 - \delta$.
Finally, since by Theorem~\ref{T:approxvis}, $P(\mathrm{Reach}_K(s_0,\pi,g)) \Rightarrow P(\mathrm{Reach}_K(s_0,\pi))$, the result follows. 
\end{proof}

The SPV framework above can thereby combine reachability over a finite sample of initial states to yield a rigorous tail bound guarantee for safety over a given safety horizon $K$.
This, of course, is for a given policy $\pi$.
In the next section, we turn to the main subject of our work: synthesizing control policies $\pi$ for the dynamical system~\eqref{E:dynsys} that achieve strong semi-probabilistic guarantees of this kind.
\section{Learning-Based Synthesis of Provably Safe Vision-Based Control}\label{sec:methods}

At the high level, our goal is to learn a policy $\pi$ which has a long safety horizon $K$ (that is, does not reach an unsafe state for any possible trajectory over as long a horizon $K$ as possible) with high probability $1-\delta$.
Suppose that $\pi_\theta$ is parametric with parameters $\theta$ (e.g., a neural network), and $K$ (i.e., the target safety horizon) is fixed.
Our goal is to maximize the probability that a trajectory is safe for at least $K$ steps, that is,
\begin{equation}
    \label{eq:safe_main}
    \max_\theta \Pr_{s \sim \mathcal{D}}\left[P(\mathrm{Reach}_K(s,\pi_\theta))\right].
\end{equation}
To make this practical, we can only rely on a finite sample of initial states, as well as make use of the cGAN $g$.
Consequently, the revised proxy objective is
\begin{equation}
    \label{eq:safe_main_proxy}
    \max_\theta \sum_i P(\mathrm{Reach}_K(s_i,\pi_\theta,g)).
\end{equation}
The previous section shows that this still enables rigorous semi-probabilistic verification.
Additionally, we consider a special case in which safety properties are tied to a scalar \emph{safety score} (for example, cross-track error in lane following).
In particular, let $\sigma(\mathrm{Reach}_K(s_i,\pi_\theta,g))$ be a safety score function over the reachable set, with $P(\mathcal{T})$ translating the safety score into a predicate (e.g., error exceeds a predefined threshold).
We assume that we can obtain differentiable bounds on $\sigma(\mathrm{Reach}_K(s_i,\pi_\theta,g))$ (e.g.,
if we use $\alpha,\beta$-CROWN~\cite{zhang2018efficient,xu2021fast,xu2020automatic}).

\subsection{The Learning Framework}


A central challenge in synthesizing a provably safe policy $\pi_\theta$ in our setting arises from the involvement of high-dimensional images generated by the generator $g$ (as a proxy for the true perception model $h$), which serve as inputs to the controller. 
Our overall approach is as follows.
First, we begin with a pre-trained controller $\pi_{\hat{\theta}}$ that is empirically safe, which we also use as the \emph{anchor controller} to avoid sacrificing too much empirical performance as we train for safety verification.
Next, starting with $\hat{\theta}$, we train (or fine-tune) $\pi_\theta$ to minimize
\(
\sum_{i \in S_l} \mathcal{L}(s_i,\theta),
\)
where $S_l$ is a set of initial states $s_0$ used in training which evolves over training iterations $l$, and $\mathcal{L}(\theta)$ an appropriate loss function.
The central algorithm design questions thus amount to 1) the choice of the loss function, and 2) the problem of selecting data $S_l$ to use for training in each iteration, so as to ultimately obtain a provably (rather than merely empirically) safe policy.
We address these questions next.





\subsection{Loss Function} 
We propose a loss function that integrates both a performance-preserving loss and safety loss as follows:
\begin{align}\label{eqn:loss_fn}
    \mathcal{L}(s_0, \theta) = \lambda_1 \mathcal{L}_{\text{perf}}(s_i,\theta)+\lambda_2 \mathcal{L}_{\text{safety}}(s_i, \theta).
\end{align}
The performance loss $\mathcal{L}_{\text{perf}}$ aims to preserve the empirical performance with respect to the pre-trained anchor controller $\pi_{\hat{\theta}}$.
Depending on the training method,
it can be (1) $\ell_2$ loss between predictions and ground truth for supervised training,
or (2) RL objective (e.g. policy gradient) to encourage high expected returns for RL training.

Turning next to the safety loss, recall that we assume that safety is quantified by a safety score function $\sigma(\mathrm{Reach}_K(s_i,\pi_\theta,g))$.
One candidate would simply be to use this score as part of the loss function.
However, this is impractical, as it is typically intractable to compute at scale and to the extent that it can be done, the tools for doing so are not differentiable.
However, neural network verification techniques exist which compute \emph{differentiable sound upper and lower bounds on this quantity}, and these therefore make natural candidates to use in constructing a loss function.
More precisely, let $\underline{\sigma}(\mathrm{Reach}_K(s_i,\pi_\theta,g)) \le \sigma(\mathrm{Reach}_K(s_i,\pi_\theta,g)) \le \overline{\sigma}(\mathrm{Reach}_K(s_i,\pi_\theta,g))$ (i.e., $\underline{\sigma}(\cdot)$ is the lower and $\overline{\sigma}(\cdot)$ the upper bound on $\sigma(\cdot)$).
To simplify notation, we let $\sigma^{(i)}_K = \sigma(\mathrm{Reach}_K(s_i,\pi_\theta,g))$, with $\overline{\sigma}^{(i)}_K$ and $\underline{\sigma}^{(i)}_K$ the corresponding upper and lower bounds.
Then we define the safety loss as
\begin{equation}
\label{eq:veri_loss}
    \mathcal{L}_{safety}(s_i,\theta) = \frac{|\overline{\sigma}_K^{(i)}| + |\underline{\sigma}_K^{(i)}|}{K-1},
\end{equation}
which measures the rate of change of the reachable region. 




\subsection{Adaptive Training Data}

Our adaptive training procedure performs gradient updates by sampling batches from an adaptive collection of training data $S_l$ which consists of two disjoint and fixed-size components: the set of random initial states $S_0$, and $S_A$, maintained as a priority queue, containing initial states for which safety is a challenge to satisfy. Specifically, when training starts, $S_A$ is empty, and we gradually populate $S_A$ during the warmup period by adding the $m$\% most challenging datapoints (in the sense detailed below) from each training batch to it. 
To ensure $S_0$ and $S_A$ remain disjoint, whenever a datapoint is added to $S_A$, it is also deleted from $S_0$, and we generate another datapoint uniformly randomly to add to $S_0$. 


We select datapoints to add to $S_A$ based on the rate of change in the safety margin $\sigma_i$ over an entire $K$-step trajectory. 
For example, a datapoint where the vehicle deviates from the center of the lane and drifts toward the margin at high speed is prioritized. 
This allows us to detect points that are likely to become unsafe ahead of time. The safety loss defined in Equation~\eqref{eq:veri_loss} can be directly used as this metric here.
If training fails to improve safety on datapoints in $S_A$, $S_A$ effectively becomes fixed once full. 
Otherwise, if safety improves or we encounter new datapoints that are less safe, $S_A$ adapts to reflect such changes.

After we have enough datapoints in $S_A$, future training batch $T$ with size $L$ consists of $\lfloor p \cdot L \rfloor$ datapoints from $S_A$ and the rest from $S_0$, where $p$ is a tunable parameter during training. The portion of datapoints from $S_0$ can be sampled uniformly. The portion of datapoints from $S_A$ are sampled following the Efraimidis \& Spirakis~\cite{efraimidis2006weighted} weighted sampling approach to prioritize datapoints that are more difficult (less safe) than others. Specifically, for each datapoint $i$ in $S_A$, we first calculate a safety parameter
\begin{align}
    \label{eq:sampling_score}
    \eta^{(i)} &= \mathcal{L}_{safety}^{(i)} \\
&+ \Big( \max(0, |\overline{\sigma}^{(i)}_K| - \beta) + \max(0, |\underline{\sigma}^{(i)}_K| - \beta) \Big),\nonumber
\end{align}
where the second term further penalizes datapoints whose upper and lower bound is outside of the $[-\beta, \beta]$ region, where $\beta$ is a predefined safety threshold, and ensure such difficult-to-verify inputs are more likely to be sampled. We then assign a weight $w_i = e^{\alpha \cdot \eta^{(i)}}$ (where $\alpha$ is a hyperparameter) to each datapoint $i$ and normalize the weights such that $\sum_{i=1}^N w_i = 1$. 
Finally we assign each element $x_i$ a key $k_i = w_i^{1/U_i}$, where $U_i \sim \text{Uniform}(0,1)$. Selecting the top $L$ elements with the highest $k_i$ values yields a weighted random sample without replacement, where each element $x_i$ is included with probability:
$P(x_i \in S) = w_i/\sum_{j \in \mathcal{B}} w_j$.

The full algorithm for data sampling is shown in Algorithm~\ref{algo_main:sample}. Line $3$ calculates how many datapoints to sample from the buffer based on a tunable parameter $p$; line $4-8$ computes the weights and keys following~\cite{efraimidis2006weighted}; line $9-11$ construct the training batch. This approach enables us to have an adaptive training set which maintains the controller's average performances while improving its verifiability with respect to datapoints that are more difficult.

 \begin{algorithm}[!htbp]
\caption{Data Sampling Algorithm}
\label{algo_main:sample}
\begin{algorithmic}[1]
\STATE \textbf{Input:} 
$S_l = \mathcal{S}_0 \cup \mathcal{S}_A$, $\beta$, $\alpha$, $p$, $L$
\STATE \textbf{Output:} Sampled points $T$
\STATE $L_A \gets \lfloor p \cdot L \rfloor$; $L_0 \gets L - L_A$
\FOR{each datapoint $x_i \in \mathcal{S}_A$}
    \STATE Compute $\eta^{(i)}$ according to Equation~\ref{eq:sampling_score}
    \STATE Compute weight $w_i = e^{\alpha \cdot \eta^{(i)}}$
\ENDFOR
\STATE Normalize $w_i$ and compute keys $k_i$ for all $x_i \in S_A$
\STATE $T_A \gets$ top $L_A$ elements in $S_A$ by $k_i$ values
\STATE $T_0 \xleftarrow{}$ uniformly sample $L_0$ points from $\mathcal{S}_0$
\STATE $T \gets T_{A} \cup T_0$
\STATE \textbf{Return} $T$
\end{algorithmic}
\end{algorithm}

\subsection{Curriculum Learning} 

Finally, as in prior work~\cite{wu2024verified}, we use curriculum learning. Specifically, during training, we first target $K_i$-step verified safety. After a series of epochs, we progress to $K_{i+1}$-step verified safety, where $1 \leq K_1 < K_2 < \cdots < K_{n-1} < K_n = K$. This gradual increase in the verification horizon helps improve training stability and enables the controller to learn more complex safety constraints.
\section{Experiments}\label{sec:experiment}
\subsection{Experiment Setup}
We evaluate our approach in four settings: 
1) the X-Plane 11 Flight Simulator, 
2) the CARLA Simulator, 
3) a mini-city miniature urban physical autonomous driving platform with an F1Tenth racing car,
and 4) the Airsim Simulator for drone control.
We use two evaluation metrics: 
1) empirical performance and 
2) lower bound probability for safety guarantee as a function of $K$. 
We use three baselines for comparison: 
1) RESPO~\cite{ganai2024iterative}, a safe reinforcement learning framework using iterative reachability analysis; 
2) SAC-RCBF~\cite{emam2022safe}, which incorporates safety as a robust-control-barrier-function layer into training; 
and 3) VSRL~\cite{wu2024verified}, which guarantees finite-horizon safety by integrating incremental reachability verification into safe reinforcement learning.

Specifically, we first consider the autonomous aircraft taxiing problem using the \textbf{X-Plane 11 Flight Simulator}~\cite{xplane11sim}. 
For training the image generator, we collected 20,000 state-image pairs. 
Each sample consists of an image captured by a forward-facing camera and the corresponding state information (lateral offset $d$ ranging from $-10$ to $10$ meters and heading error $\theta$ between $-0.5$ and $0.5$ radians). 
Data was collected while the aircraft operated at a constant ground speed.

Our second set of experiments consider autonomous lane following using the 
\textbf{CARLA Simulator}~\cite{pmlr-v78-dosovitskiy17a} version 0.9.14. 
For training the image generator, we sample initial states consisting of the lateral distance $d$ from the lane center, the heading error $\theta$ relative to the lane direction, and the global coordinates $(x, y, z)$ within the CARLA map. 
The dataset includes trajectories with $d \in [-0.8, 0.8]$ meters and $\theta \in [-0.15, 0.15]$ radians. 
Final dataset contains 20,000 state-image pairs collected across different towns (maps) and environmental conditions.

Our third experiment extend the CARLA experiments to a \textbf{F1Tenth racing car}, an open-source 1:10 scale autonomous vehicle, for lane following in a \textbf{miniature city physical testbed}.
The F1Tenth vehicle is equipped with a front-facing camera that provides visual input for lane following and can achieve scaled speeds comparable to full-scale autonomous vehicles. 
This platform enables us to evaluate the transferability of our training framework to real-world physical systems.
We collected $400$ images in the mini-city and manually annotated the state information for these. 
We then finetuned the image generator from the CARLA experiments on this dataset to obtain the image generator for the F1Tenth experiments.

The final problem we consider is drone navigation and obstacle avoidance in \textbf{Airsim Simulator}~\cite{airsim2017fsr}.
The drone is initialized at a fixed hovering position in front of an obstacle observable by its camera. 
The goal is to learn a smooth control policy while avoid collisions.
The image generator is trained on a dataset with $20,000$ samples. 
The state input consists of the drone's 3D position $(x,y,z)$ and orientation represented as a quaternion $(q_w,q_x,q_y,q_z)$. 
The dataset was collected while the drone was manually controlled by a remote controller. 

For all experiments, we pretrain an anchor controller first.
For the three path following experiments, the training is done by imitation learning from a tuned PID controller for the corresponding task, with learning rate of $0.0005$, batch size $256$, and $200$ epochs.
For the drone experiment, the anchor controller is trained using the standard PPO algorithm with $\gamma$ $0.99$, learning rate $0.0003$ for a total of $200000$ steps.

Then, during safety training,
we used supervised training for the path following experiments where we collected a dataset of $30,000$ samples of (state, action) pair. 
We train the controller for 100 epochs, with batch size $128$ and learing rate $0.0002$.
For the drone experiment, we integrated the safety loss directly into RL training using the PPO algorithm.
The controller is trained for an additional $200000$ steps with learning rate $0.0002$. 
Detailed hyperparameters and settings can be found in appendix~\ref{appendix:training}.

\begin{figure}[t]
  \centering
  \includegraphics[width=\columnwidth]{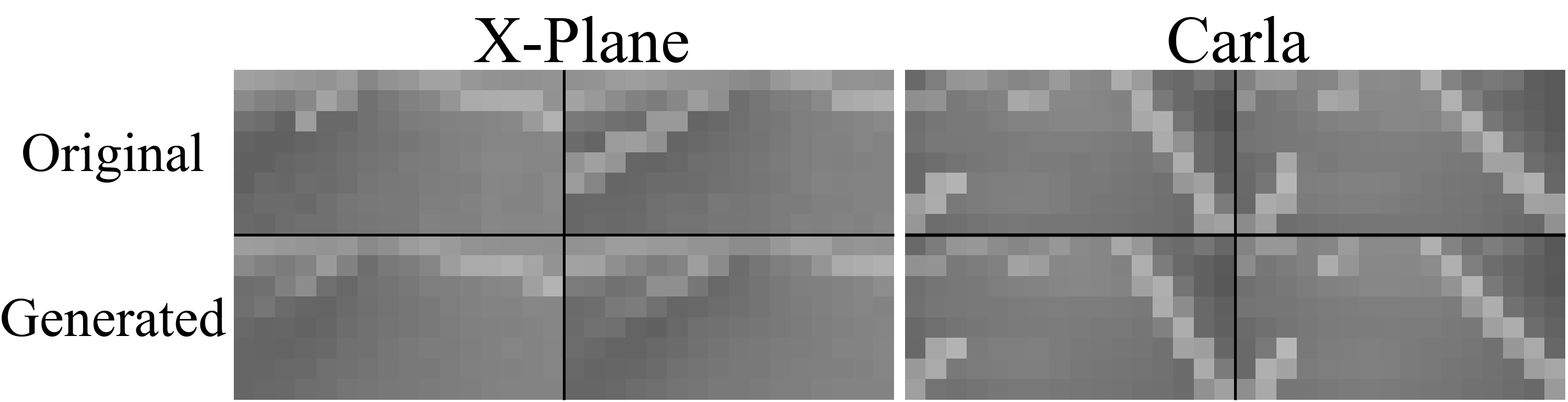}

  \caption{X-Plane and Carla generator illustrations.}
  \label{fig:genenrator_eval_carla_and_plane}
\end{figure}

\begin{figure}[t]
  \centering
  \includegraphics[width=0.5\columnwidth]{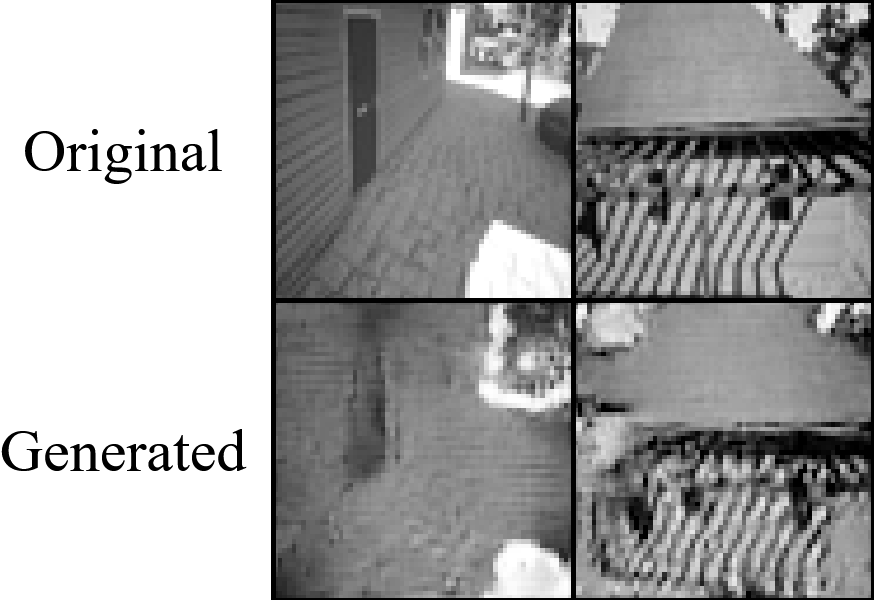}

  \caption{Drone generator illustration.}
  \label{fig:genenrator_eval_drone}
\end{figure}
\subsection{Image Generator Training and Evaluation}

The image generators are implemented as conditional GAN (cGAN) that maps from state information and a dimension $10$ latent vector to grayscale images; in ablations below we also consider diffusion models for this purpose.
The latent vector $z$ is sampled from a uniform distribution $\mathcal{U}(-1, 1)$, aiming to capture semantic variations that are not explicitly represented in our state information (e.g., lighting conditions, road textures, environmental elements). 
The generators output $8 \times 16$ images in the three path following experiments, and $64 \times 64$ images in the drone experiment.

For the path following experiments, 
we use a 4-layer fully connected neural network with hidden layers of 256 neurons each, using ReLU activation for the hidden layers and Tanh activation for the output layer. 
For the drone experiment, 
we use a more complex convolutional generator; its exact architecture is documented in appendix~\ref{appendix:training}. 
The discriminator in all experiments is a convolutional neural network. 
During training, 
we apply spectral normalization~\cite{miyato2018spectral} to all discriminator layers and orthogonal regularization~\cite{brock2016neural} to the generator loss function. 
Both generator and discriminator networks are initialized using orthogonal initialization and trained with batch size 128, learning rate 7e-4, for 100 epochs. 

Figure~\ref{fig:genenrator_eval_carla_and_plane} and Figure~\ref{fig:genenrator_eval_drone} show two ground truth images from the test set for each experimental setting alongside their corresponding generated images. 
Images from F1Tenth are excluded here as they are visually similar to the Carla images.
From visual inspections, the generators perform well for the Carla and Plane experiments, which is largely due to the low-dimensional nature of the images in these settings.
For the Drone experiment where we have higher resolution images, on the other hand, 
while the generated images are less precise, they still capture the essential semantic features needed for effective controller learning.

\begin{figure}[ht]
    \centering
    \includegraphics[width=\columnwidth]{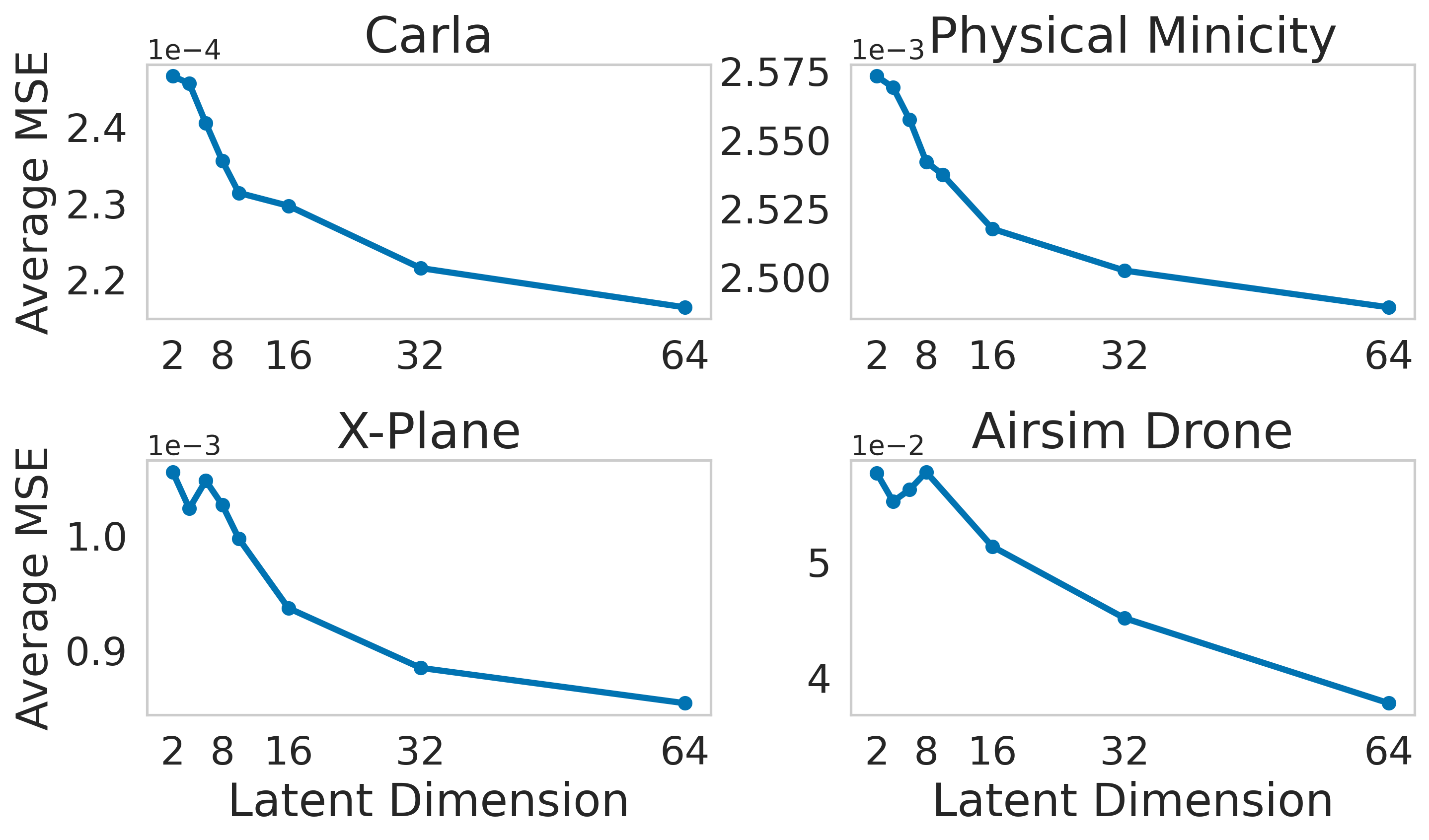}
    \caption{Empirical validation of Assumption~\ref{Assumption:hg}.}
    \label{fig:assumption1}
\end{figure}
An important ingredient in our approach is Assumption~\ref{Assumption:hg}, that for true image inputs we can find a latent representation $z$ as input to the generator that results in its close approximation. 
We validate this empirically for $8$ choices of latent dimensions: $2,4,6,8,10,16,32,64$. 
We evaluate the generators on a separate test set: for each ground-truth image, we perform gradient-based optimization to find the best latent variable $z$ that minimizes the mean squared error (MSE).
Notably, since this is not guaranteed to find a globally optimal $z$, our results provide a pessimistic evaluation (i.e., it is possible that such a $z$ exists even if we fail to find it using gradient-based optimization).

As shown in Figure~\ref{fig:assumption1},
as latent dimension increases, the averaged MSE rapidly decreases, i.e., the generator $g$ better approximates the unknown $h$ in all four experiment settings we consider in this work. 
Moreover, we can find effective approximations of the true image inputs for even relatively small latent dimension.

\begin{figure*}[h]
      \centering
      {\includegraphics[width=0.85\linewidth]
{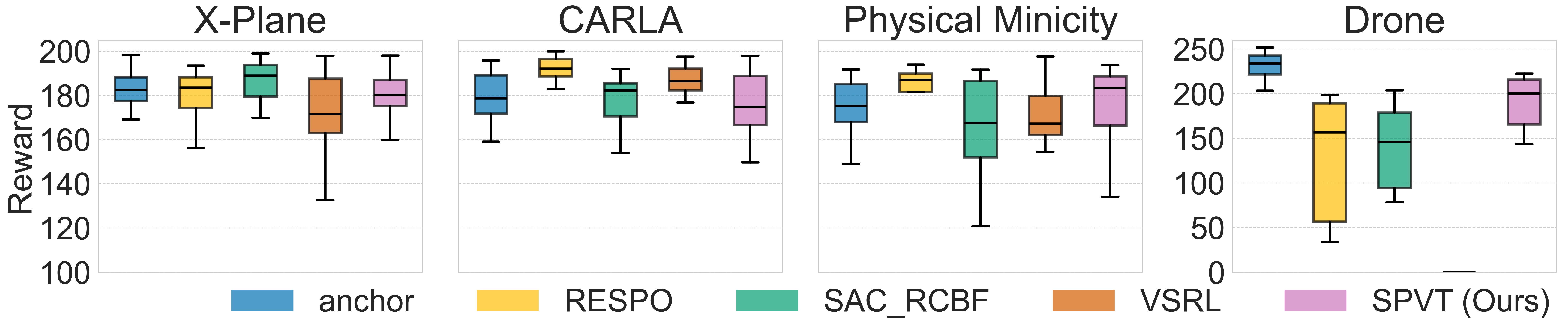}
}
      \caption{Empirical performance comparison of controllers. Box plots show the distribution of episode rewards over 100 evaluation episodes.}
      \label{fig:controller_rewards}
   \end{figure*}

\begin{figure*}[h]
  \centering
  {
  \includegraphics[width=0.85\linewidth]
  {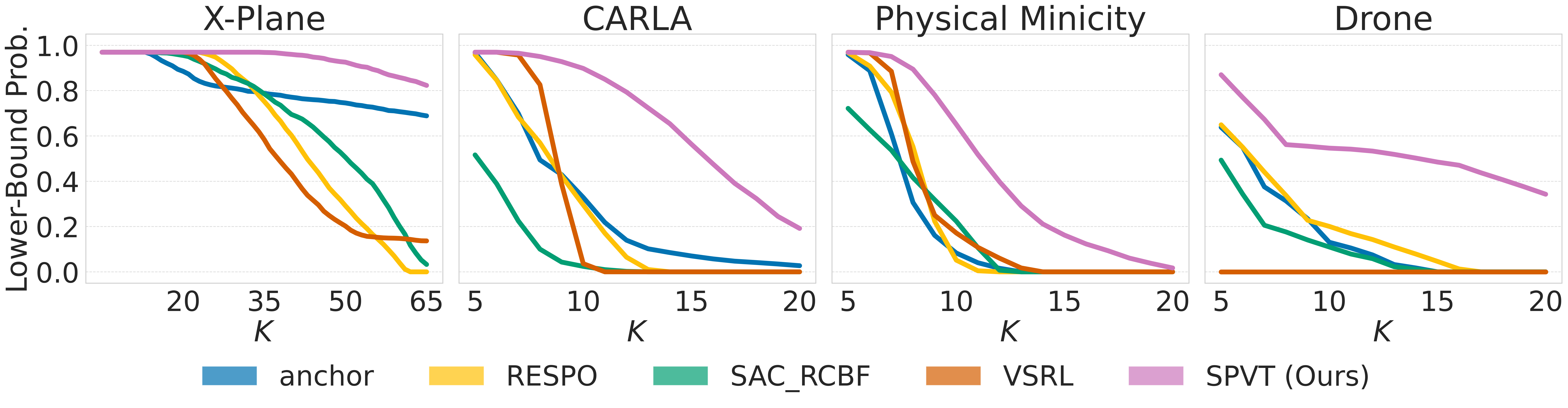}
}
  \caption{\textit{Semi-Probabilistic Verification (SPV)} results: $x$-axis marks the target verification trajectory length ($K$). $y$-axis is the lower bound safety probability.}
  \label{fig:spv_results}
\end{figure*}

\subsection{Results}

\subsubsection{Empirical Performance}

To evaluate the controllers' empirical performance, we simulate trajectories starting from $100$ random initial states for $200$ steps. 
We use the undiscounted cumulative rewards $\sum r_t$ as the evaluation metric.
For the path following tasks (X-Plane, Carla, and Physical Minicity), 
the reward at each step $t$ is defined as $r_t=1-\min(1, |d_t|/\beta)$, where $d_t$ is the cross-track error at step $t$ and $\beta$ is a predefined safety threshold.
We set $\beta=1.0$m for Carla and Physical Minicity evaluations, and $\beta=12.5$m for X-Plane evaluations.

For drone control in Airsim, 
we used the same reward function in both training and evaluation.
The reward includes components for forward velocity, attitude stability, and control smoothness.
The complete reward function is provided in appendix~\ref{appendix:exp_drone}.

As shown in Figure~\ref{fig:controller_rewards}, the proposed \textsc{SPVT} approach maintains strong empirical performance across all experimental settings. 
Notably, in the Drone experiment, which has the highest-dimensional image input, \textsc{SPVT} demonstrates significantly 
superior performance compared to all safe training baselines. This result suggests that \textsc{SPVT} is the only scalable safe training method capable of handling high-dimensional image inputs. 
It is worth noting that VSRL training fails entirely in the Drone experiment because VSRL requires formal verification over the controller's complete input space, which becomes computationally infeasible for high-dimensional image data.

\subsubsection{Safety Guarantee}

For the path following tasks,
the safety property we consider is the vehicle not leaving the current lane, equivalently, $|d_i| \leq \beta$ for $0 \leq i \leq K$.
For the drone experiment,
safety properties include altitude bounds ($z \in [0.2, 15]$ meters) and attitude limits (roll and pitch angles $\leq 1.2$ radians).

To obtain the lower-bound safety probability under our semi-probabilistic verification (SPV) framework, we collected a dataset of $2000$ initial states sampled i.i.d. from the initial state distribution. 
We used $\alpha,\beta$-CROWN~\cite{zhang2018efficient, xu2020automatic, xu2021fast} to verify the model iteratively.
At each step,
$\alpha,\beta$-CROWN computes bounds on the model output, which, after dynamics calculations, gives the reachable state regions at the next timestep.
We then check whether this state region intersects with unsafe states. 
We iterate this forward reachability analysis for $K$ steps, 
obtaining empirical verified safety result, i.e. the proportion of initial states in our dataset that are verified safe at each step.
These empirical results are then used in Theorem~\ref{thm:hoeffding_bound} to compute formal lower-bound safety probabilities over the entire initial state space.
We set the confidence parameter $\delta=0.05$ for all experiments. 

Figure~\ref{fig:spv_results} shows the resulting safety probability lower bounds as a function of horizon $K$. Compared to all baseline methods, the proposed \textsc{SPVT} approach significantly increases these probabilistic safety bounds. 

\subsubsection{Ablations}

\begin{figure}[ht]
    \centering
    \includegraphics[width=0.85\columnwidth]{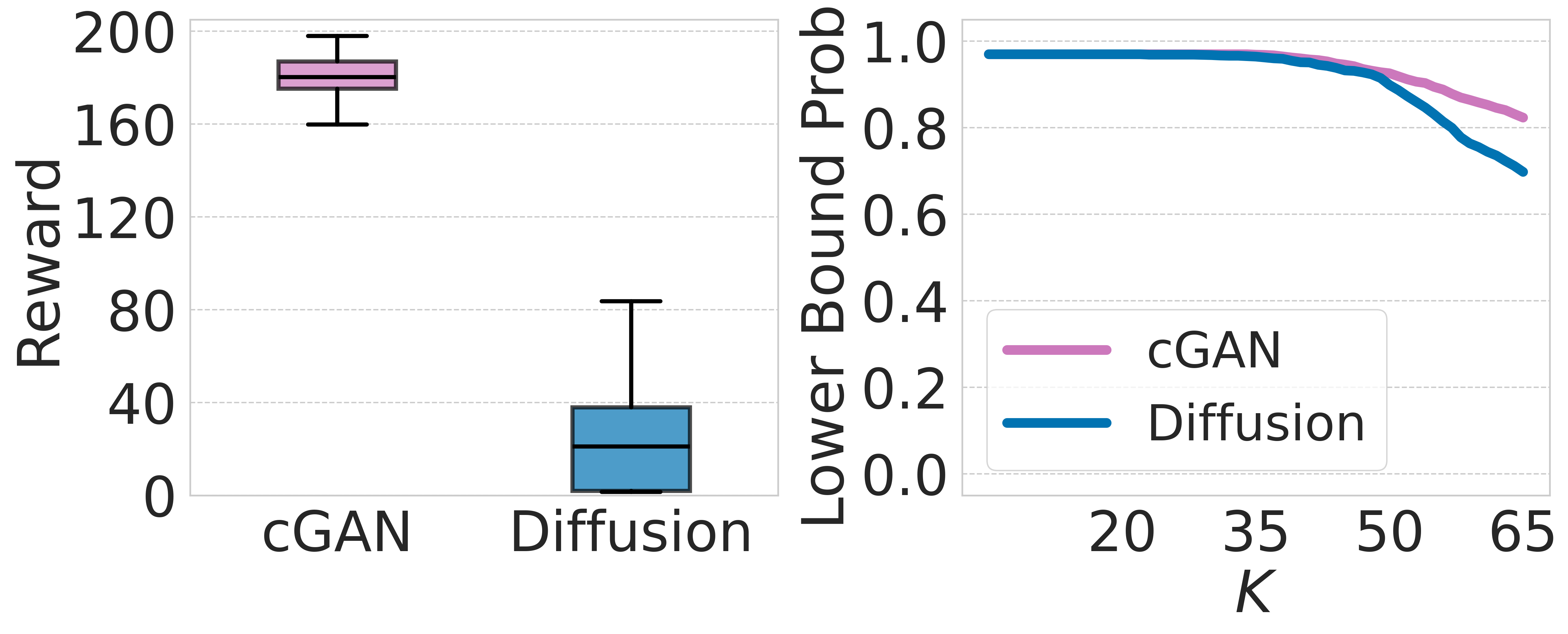}
    \caption{Ablation study comparing cGAN vs. diffusion-based state-to-image mapping in X-Plane. Left: controller performance over 100 episodes. Right: SPV verified safety probability bounds.}
    \label{fig:ablation_plane}
\end{figure}

A natural alternative to our cGAN-based state-to-image mapping is to use diffusion-based image generation.
In this case, $g(s,z) = z - \phi(x,z)$, where the diffusion model $\phi(x,z)$ predicts the noise in the randomly generated input.
To provide finer control over the dimension of $z$, we make use of an autoencoder $\psi(z)$ that takes randomly generated vector $z$ of a given dimension as an input, and outputs a decoded noisy image $z' = \psi(z)$ with the dimension of the image, so that $g(s,z)=\psi(z) - \phi(x,\psi(z))$.
To evaluate this design choice, we implement a diffusion-based variant of \textsc{SPVT} and compare it against our cGAN approach on both controller performance and safety verification.
Figure~\ref{fig:ablation_plane} shows results in the X-Plane environment.
We find that controller trainied in the cGAN setup achieve higher rewards and maintain better safety probability bounds.
Ablation results in other environments, along with more detailed analysis of the observed difference, are provided in Appendix~\ref{appendix:ablations}.

\section{Related Work}\label{sec:relatedWork}

Formally verifying the safety of vision-based controllers is extremely challenging. 
Traditional verification tools~\cite{katz2017reluplex, huang2017safety, ehlers2017formal, sinha2017certifying} are too computationally demanding for this purpose.
Recent progress has enabled verification of larger neural networks through over-approximation \cite{zhang2018efficient, gowal2018effectiveness, xu2021fast}, and abstract interpretation \cite{gehr2018ai2, singh2019abstract, katz2019marabou}. 
However, verifying vision-based controllers remains challenging due to high dimensionality, as well as the difficulty of defining safety properties in the image space.

A practical approach to verify vision-based controllers is by approximating the perception module.
This reduces verification complexity by projecting the problem from image space to lower-dimensional state space~\cite{katz2022verification, cai2024scalable, hsieh2022verifying}.
These methods, however, only focus on post-hoc verification and cannot be integrated into the training process for learning a safer controller.
For a more comprehensive survey on the verification of vision-based controllers, see~\cite{vision_based_control_verification_survey_2024}.

Safe reinforcement learning encodes safety based on the constrained Markov decision processes~\cite{altman2021constrained} but it is often too soft to enforce strict safety constraints~\cite{wang2023}. On the other hand, control-theoretic methods such as barrier functions~\cite{dawson2022safe, emam2022safe, barriernet_2023}, and reachability analysis~\cite{yu2022reachability, ganai2024iterative} are more powerful but they have limited scalability to high-dimensional system such as vision-based controllers. 
Recently, some works extend these methods to the image space~\cite{V-CBF_Abdi_ICRA2023, NeRF_based_CBF_ICRA_2023, yuxuan_2024_icra, Ihab_2024_L4DC}. 
However, they function more like a safety check around the controller, giving yes/no answers for passed in reference control, without providing formal safety guarantees about the controller itself.


\section{Conclusion and Limitations}\label{sec:conclusion}

We introduced a semi-probabilistic verification framework for efficiently training and verifying vision-based neural network controllers. Our method models the perceptual mapping from state to image with a conditional generator and uses distribution-free tail bounds to get safety guarantees over the entire initial state space.
We designed a differentiable proxy to the safety verification objective under the SPV framework that can be directly incorporated into gradient-based training, and an adaptive training set that prioritizes states for which safety property is difficult to verify. 
While our experiments demonstrate the efficacy of our approach, many limitations remain.
For the moment, our controllers and image generator require images to be grayscale and relatively low resolution, and latent dimension of the generator is relatively small.
Further research is needed to handle high-resolution visual inputs, as well as to extend to multi-modal sensing.



\section*{Acknowledgements}
This work was partially supported by 
NSF (IIS-2214141, CCF-2403758), 
ARO (W911NF-25-1-0059), 
ONR (N00014-24-1-2663), 
Foresight Institute, 
and Amazon.

\bibliography{our_ref}


\clearpage
\appendix
\setcounter{secnumdepth}{1}  
\renewcommand{\thesection}{\Alph{section}}
\setcounter{section}{0}
\section{Experiment Details}

\subsection{Path Following Experiments}

\paragraph{Dynamics}

For X-Plane, Carla, and Physical Minicity, we use a discrete-bicycle model for vehicle dynamics~\cite{kong2015kinematic}:
\begin{align*}
\dot{x} &= v \cos(\theta + \beta) \\
\dot{y} &= v \sin(\theta + \beta) \\
\dot{\theta} &= \frac{v}{l_r}\sin(\beta) \\
\dot{v} &= a \\
\beta &= \tan^{-1}\left(\frac{l_r}{l_f + l_r}\tan(\delta_f)\right)
\end{align*}
where $x$ and $y$ denote the vehicle's position coordinates, $\theta$ denotes heading angle relative to the path, $v$ is the vehicle speed, $a$ is the acceleration, $\delta_f$ is the front wheel steering angle (or equivalent control input), $\beta$ is the side slip angle, and $l_f$ and $l_r$ are distances from the center of gravity to front and rear axles respectively.

\subsection{Drone Control in Airsim}\label{appendix:exp_drone}

\paragraph{Dynamics}

We implemented the standard dynamics followng quadrotor models. 
The system state is represented as a 12-dimensional vector
$\mathbf{x} = [\mathbf{p}, \boldsymbol{\phi}, \mathbf{v}, \boldsymbol{\omega}]^T$,
where $\mathbf{p} = [x,y,z]^T$ is position, $\boldsymbol{\phi} = [\phi, \theta, \psi]^T$ are Euler angles (roll, pitch, yaw), $\mathbf{v} = [v_x, v_y, v_z]^T$ is linear velocity, and $\boldsymbol{\omega} = [\omega_x, \omega_y, \omega_z]^T$ is angular velocity.

The continuous-time dynamics are given by:
\begin{align*}
\dot{\mathbf{p}} &= \mathbf{v} \\
\dot{\mathbf{v}} &= g\mathbf{e}_3 + \frac{1}{m}\mathbf{R}(\boldsymbol{\phi})\mathbf{f} \\
\dot{\boldsymbol{\phi}} &= \mathbf{T}(\boldsymbol{\phi})\boldsymbol{\omega} \\
\dot{\boldsymbol{\omega}} &= \mathbf{I}^{-1}(\boldsymbol{\tau} - \boldsymbol{\omega} \times \mathbf{I}\boldsymbol{\omega})
\end{align*}
where $\mathbf{R}(\boldsymbol{\phi})$ is the rotation matrix from body to world frame, $\mathbf{f} = [0,0,f_{total}]^T$ is the thrust vector in body frame, $\boldsymbol{\tau}$ is the torque vector, and $\mathbf{T}(\boldsymbol{\phi})$ is the transformation matrix from body angular rates to Euler angle rates.

The system parameters are: 
mass $m = 0.468$ kg, inertia matrix $\mathbf{I} = \text{diag}(4.9, 4.9, 8.8) \times 10^{-3}$kg$\cdot$m$^2$, gravitational acceleration $g = 9.81$ m/s², and maximum thrust $f_{max} = 2.5mg$. The simulation uses a discrete timestep $\Delta t = 0.02$ seconds.

\paragraph{Reward function}

We designed the reward:
\begin{align*}
    r_t = r_{\text{forward}}+r_{\text{alive}}+r_{\text{attitude}}+r_{\text{angular}}
\end{align*}
where
\begin{itemize}
    \item $r_{\text{forward}}=\text{clip}(v_{\text{forward}},0,5)$ where $v_{\text{forward}}$ is the drone's forward velocity
    \item $r_{\text{alive}}=1.0$ for each timestep without termination signal
    \item $r_{\text{attitude}}=-0.1(\phi^2+\theta^2)$ to discourage excessive roll/pitch
    \item $r_{\text{angular}}=-0.05|\boldsymbol{\omega}|^2$ to discourage jerky controls
\end{itemize}

Termination conditions include
(1) collision,
(2) excessive pitch or roll with threshold set at $1.2$ radians,
(3) altitude outside range $[0.2,15]$ meters.
(3) ensures that the agent does not learn to exploit the alive bonus by choosing trivial actions to fly excessively high. Additionally, altitudes that are too high pose challenges for our generator model, as it was not trained on such data.

\section{Training Details}\label{appendix:training}

\subsection{Image Datasets}

\paragraph{X-Plane:}
The image generator is trained on a dataset of $20,000$ samples collected from the X-Plane simulator~\cite{xplane11sim}.
The generator takes as input a latent variable $z$ of dimension $10$ and state variable $x=(d,\theta)$ where $d$ is cross track error and $\theta$ is heading error. 
All images are converted to grayscale and downsampled to $8 \times 16$ pixels.

\paragraph{Carla:}
The image generator is trained on a dataset of $20,000$ samples collected from the Carla simulator~\cite{pmlr-v78-dosovitskiy17a} across built-in map $1-4$. 
Its input consists of a latent variable $z$ if dimension $10$ and state variable $x=(d,\theta,x,y,z)$ where $d$ is cross track error, $\theta$ is heading error, and $x,y,z$ are the global coordinates of the vehicle within the Carla simulator.
The images are converted to grayscale and downsampled to $8 \times 16$.

\paragraph{Physical Minicty F1Tenth:}
Image generator is finetuned from the Carla image generator using $400$ images collected from our minicity testbed.
We manually annotated these images with corresponding state information.
The images are also converted grayscale and downsampled to $8 \times 16$ pixels.

\paragraph{Airsim:}
Image generator is trained on a dataset of $20,000$ samples collected from the Airsim simulator~\cite{airsim2017fsr}, using the AirSimNH map (an urban neighborhood environment). 
Its input consists of a latent variable $z$ of dimension $10$ and state variable $x=(x,y,z,q_w,q_x,q_y,q_z)$ where the $(x,y,z)$ represents the drone's 3D position and $(q_w, q_x, q_y, q_z)$ represents its orientation in quaternion format. 
The images are converted to grayscale and downsampled to $64 \times 64$ pixels.

\subsection{Anchor Controller Pretraining}

\paragraph{Path Following Experiments:}
In the X-Plane amd Carla experiments,
the anchor controllers were pretrained on a pre-collected dataset of $30,000$ samples.
Each sample is an image-action pair.
For X-Plane, we first extract the cross track error and heading error from the simulator, then apply the proportional control law $\phi = -0.74d - 0.44\theta$ to compute the ground-truth action.
For Carla, ground-truth actions are extracted directly from the simulator while the vehicle operates in autonomous drive mode. 
In the Physical Minicity experiment, the anchor controller is directly adopted from the Carla simulator, as collecting a sufficiently large dataset for training from scratch in the physical environment is challenging.

\paragraph{Airsim Drone Control Experiment:}
The anchor controller is trained using the PPO algorithm within the Airsim simulator on the map AirSimNH. 
The reward function is defined in Appendix~\ref{appendix:exp_drone}, and detailed hyperparameters are documented in Appendix~\ref{appendix:(hyper)parameters}.

\subsection{Hyperparameters}\label{appendix:(hyper)parameters}

\subsubsection{Generator Training}
Table~\ref{tab:gen_parameters} contains the hyperparameters and settings for training our cGAN generator in our experiments.

\begin{table}[!t]
\begin{tabular}{c|c}
\textbf{Hyperparameters}                  & Settings           \\ \hline
Learning Rate                             & $7 \times 10^{-4} \rightarrow 5 \times 10^{-5}$ \\
Learning Rate Schedular & Cosine \\
Batch Size                                & $128$              \\
Epochs                                    & $100$              \\
Latent Dimension                          & $10$               \\
Hidden Layer Activation Function          & ReLU               \\
Output Activation Function                & Tanh               \\
$\ell_1$ Loss Weight                      & $10$               \\
Adversarial Loss Weight                   & $1.0$              \\
Orthogonal Regularization Weight          & $1 \times 10^{-4}$ \\ \hline
\textbf{Path Following Specific Settings} &                    \\
Network Architecture                      & MLP cGAN           \\
Hidden Layers                             & $(256) \times 4$   \\
Image Resolution & $8 \times 16$ \\
\hline
\textbf{Airsim Drone Specific Settings}   &                    \\
Network Architecture                      & CNN cGAN           \\
Image Resolution & $64 \times 64$ \\
Base \# of Channels                       & $24$               \\
\# of Residual Blocks                     & $2$                \\
Condition Embedding Dimension             & $128$              \\
Weight Initialization                     & Xavier Uniform     \\
Normalization                             & BatchNorm2d        \\
Upsampling Method                         & ConvTranspose2d   
\end{tabular}
\caption{Generator Training Hyperparameters and Settings}
\label{tab:gen_parameters}
\end{table}

\subsubsection{Anchor Controller Training}
Table~\ref{tab:anchor_training_parameters} documents the hyperparameters and settings for training our anchor controllers.

\begin{table}[!t]
\begin{tabular}{c|c}
\textbf{Hyperparameters}         & Settings                \\ \hline
Hidden Layer Activation Function & ReLU                    \\
Output Activation Function       & Tanh                    \\
Architecture                     & MLP                     \\ \hline
\textbf{Path Following}          &                         \\
Learning Rate                    & $5 \times 10^{-4}$      \\
Batch Size                       & $256$                   \\
Epochs                           & $200$                   \\
Loss Function                    & Supervised MSE          \\
Hidden Layers                    & $(128,64,32,8)$         \\ \hline
\textbf{Airsim Drone}            &                         \\
Learning Rate                    & $3 \times 10^{-4}$      \\
Warmup Steps                     & $3000$                  \\
Steps per Update                 & $2000$                  \\
Batch Size                       & $128$                   \\
\# of Epochs per Update          & $80$                    \\
Total Steps                      & $200,000$               \\
PPO Clip Ratio                   & $0.2$                   \\
GAE Lambda                       & $0.95$                  \\
Discount Factor                  & $0.99$                  \\
Hidden Layers                    & $((512) \times 3, 100)$
\end{tabular}
\caption{Anchor controller training hyperparameters and settings}
\label{tab:anchor_training_parameters}
\end{table}

\subsubsection{\textsc{SPVT}}
Table~\ref{tab:spvt_training_parameters} contains hyperparameters and settings for \textsc{SPVT} training. 
Note that for Drone training, verification of the larger generator plus controller model is computationally demanding, 
so each update only uses a mini-batch of size $32$ to compute safety objective.

\begin{table}[!t]
\begin{tabular}{c|c}
\textbf{Hyperparameters}              & Settings                                        \\ \hline
$\lambda_1$ (Performance Loss Weight) & $0.25$                                          \\
$\lambda_2$ (Safety Loss Weight)      & $1.0$                                           \\
Curriculum Starting $K$               & $4$                                             \\
Curriculum Final $K$                  & $10$                                            \\
$S_A$ (Adversarial Training Set) Size & $4000$                                          \\ 
$p$: Adversarial Sample Proportions & start at $0.5$
    \\
$p$ Step Size & $0.05$
\\
\hline
\textbf{Path Following}               &                                                 \\
Learning Rate                         & $8 \times 10^{-5} \rightarrow 1 \times 10^{-5}$ \\
Learning Rate Scheduler               & Cosine                                          \\
Batch Size                            & $256$                                           \\
Epochs                                & $100$                                           \\ \hline
\textbf{Airsim Drone}                 &                                                 \\
Learning Rate                         & $3 \times 10^{-4}$                              \\
Steps per Update                      & $1000$                                          \\
Batch Size                            & $128$                                           \\
\# of Epochs per Update               & $20$                                            \\
Total Steps                           & $100,000$                                       \\
Safety Loss Batch Size                & $32$                                            \\
PPO Clip Ratio                        & $0.2$                                           \\
GAE Lambda                            & $0.95$                                          \\
Discount Factor                       & $0.99$                                         
\end{tabular}
\caption{\textsc{SPVT} training hyperparameters and settings}
\label{tab:spvt_training_parameters}
\end{table}

\subsubsection{Compute Resources}
All generator trainings, anchor controller trainings, and \textsc{SPVT} training in path following experiments were ran on NVIDIA GeForce RTX 3090 GPU or NVIDDIA GeForce 4080 Super GPU. \textsc{SPVT} training for drone experiment was ran on NVIDIA A100 80GB GPU. 

\section{More Ablations}\label{appendix:ablations}

In this section, 
we first show additional ablations on generator architecture choice.
Then, we show ablations on two other design choices:
the adaptive training data and curriculum learning.

\begin{figure}[h]
    \centering
    \includegraphics[width=0.85\columnwidth]{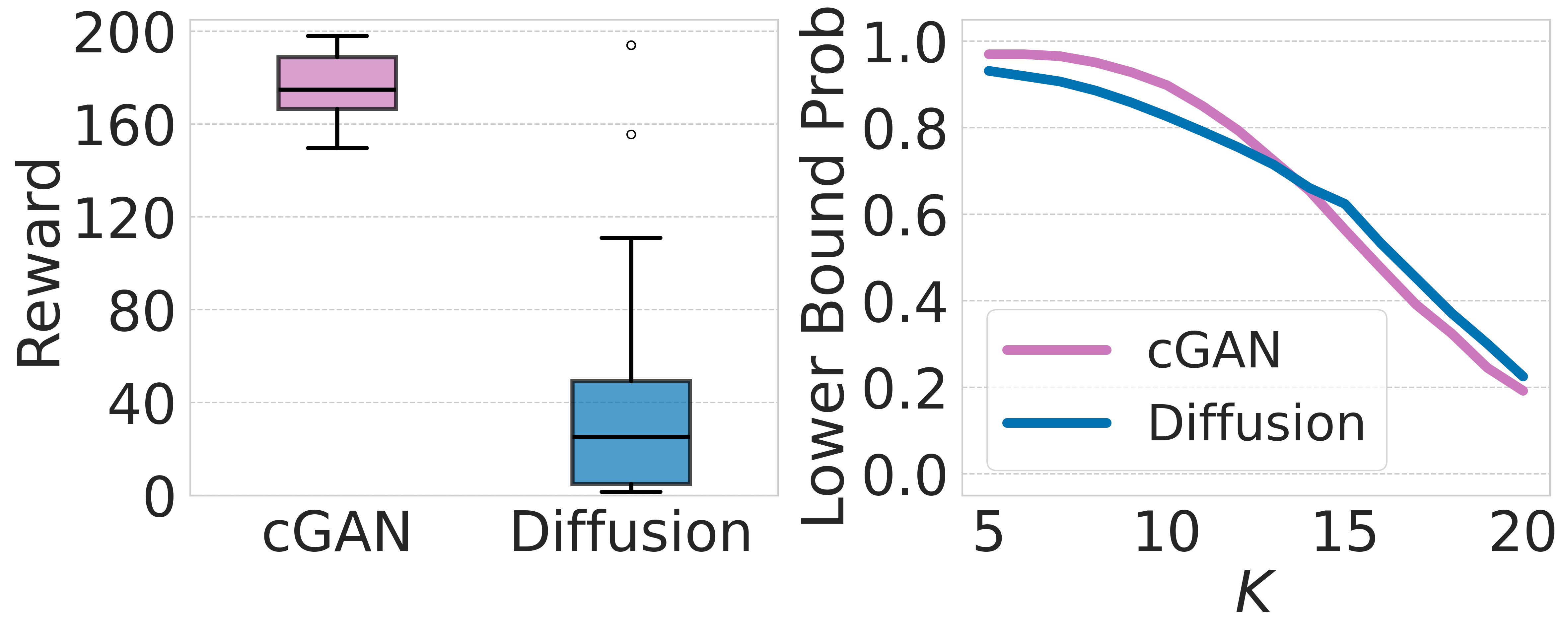}
    \caption{Ablation study comparing cGAN vs. diffusion-based state-to-image mapping in Carla. Left: controller performance over 100 episodes. Right: SPV verified safety probability bounds.}
    \label{fig:ablation_carla}
\end{figure}

\begin{figure}[!t]
    \centering
    \includegraphics[width=0.85\columnwidth]{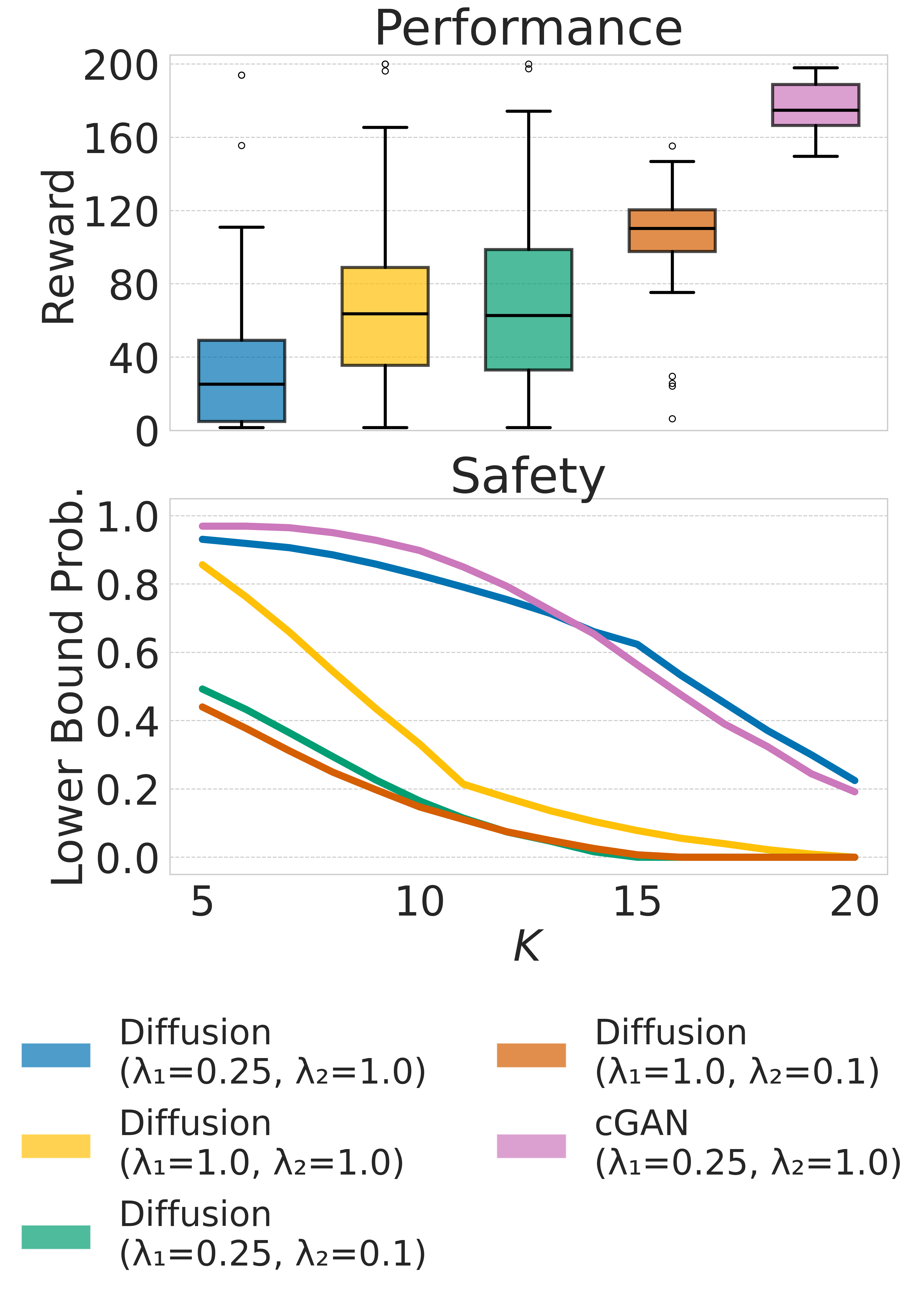}
    \caption{Analysis of hyperparameter effects for diffusion-based SPVT controllers in Carla. Top: Controller performance. Bottom: SPV safety probability lower bound.}
    \label{fig:ablation_compare_hyperparameter}
\end{figure}

\subsection{Generator Architecture: More Details}

Figure~\ref{fig:ablation_carla} shows results from the Carla environment, with similar findings to our ablation experiments from the X-Plane environment.
We observe that controller performance degrades significantly under the diffusion-based setup, though safety probability bounds are more comparable to those achieved in the cGAN approach.

Our method offers a convenient way to address this safety-utility tradeoff by adjusting the hyperparameters in Equation~\ref{eqn:loss_fn}.
Specifically, increasing $\lambda_1$ or decreasing $\lambda_2$ to prioritize maintaining performance.
However, figure~\ref{fig:ablation_compare_hyperparameter} demonstrates the challenge of hyperparameter tuning in diffusion-based SPVT.

We observe that increasing $\lambda_1$ alone while maintaining $\lambda_2$ at the same level used in our cGAN setup barely improves the controller performance for the diffusion-based approach (yellow box/line).
Furthermore, when $\lambda_2$ is simultaneously decreased, controller starts to recover good performances, but it comes at the cost of significant degradation in safety (orange box/line).
The green plot shows results of decreasing $\lambda_2$ alone, we see that it has minimal impact on controller performance while significantly compromises safety.

We hypothesize that the fundamental limitation of diffusion-based image generators within our SPVT framework comes from the meaningless output bound produced by even the state-of-the-art neural network verification tool.
Although we treat generator and controller as one unified enitty during our training process,
generator parameters are frozen to preserve the learned state to image mapping.
Hence, the verification process required to compute our safety loss reduces to verifying the controller on the output bounds (bounded image) of the generator model.
However, when using diffusion-based image generator, bounds must be computed at each denoising step, leading to compounded accumulation of uncertainty.
This results in extremely loose bounds on teh output image, which offers no meaningful constraints for safety checking.
On the other hand, the cGAN generator outputs the image directly in a single forward pass, so we are able to obtain meaningfully tight bounds on the output image. 

\begin{figure}[h]
    \centering
    \includegraphics[width=0.85\columnwidth]{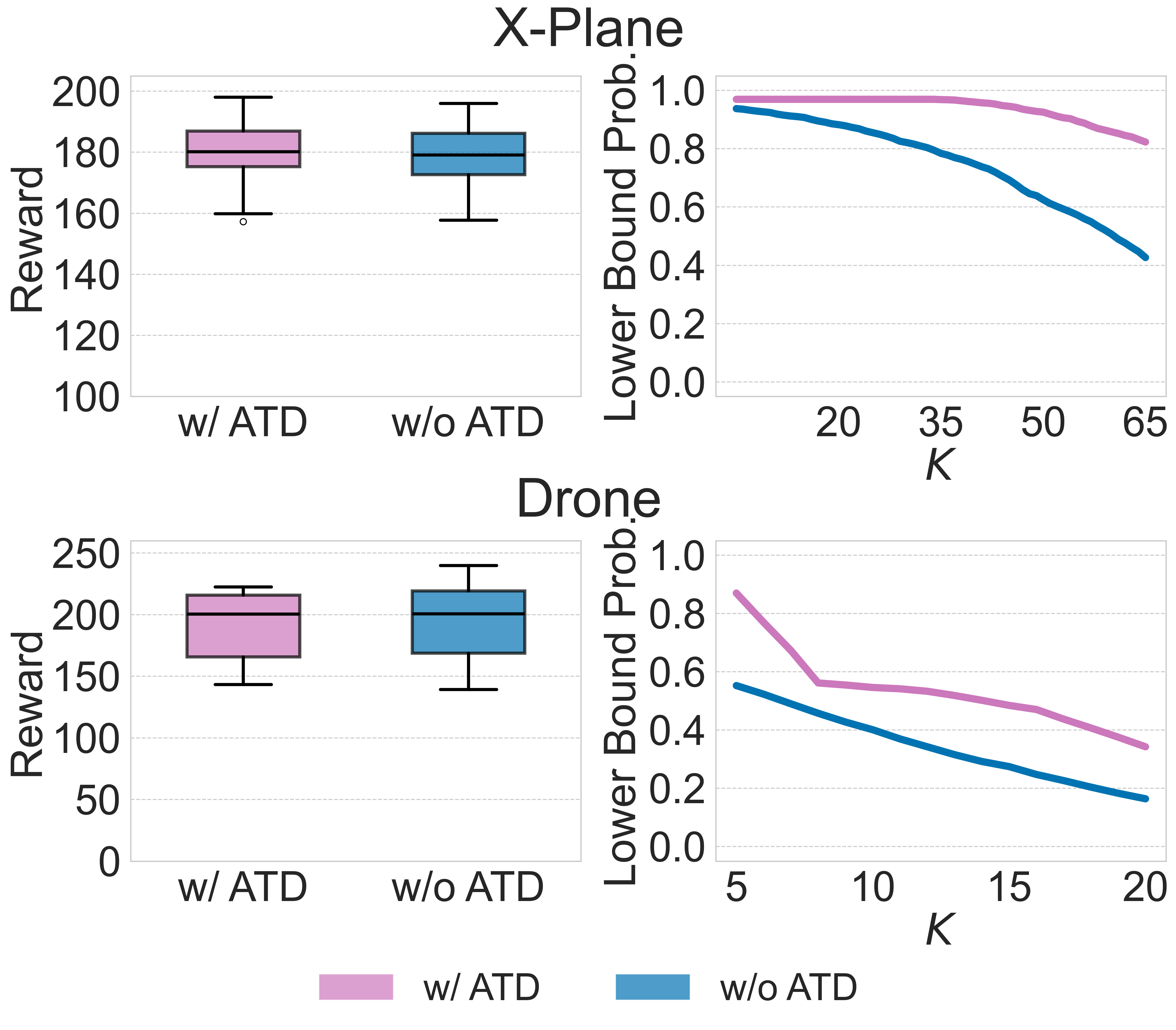}
    \caption{Ablation study on adaptive training data (ATD) in X-Plane (top) and Drone (bottom). Left: controller performance over 100 episodes. Right: SPV verified safety probability lower bounds. ATD improves safety verification without sacrificing performance.}    
    \label{fig:ablation_atd}
\end{figure}

\begin{figure}[h]
    \centering
    \includegraphics[width=0.85\columnwidth]{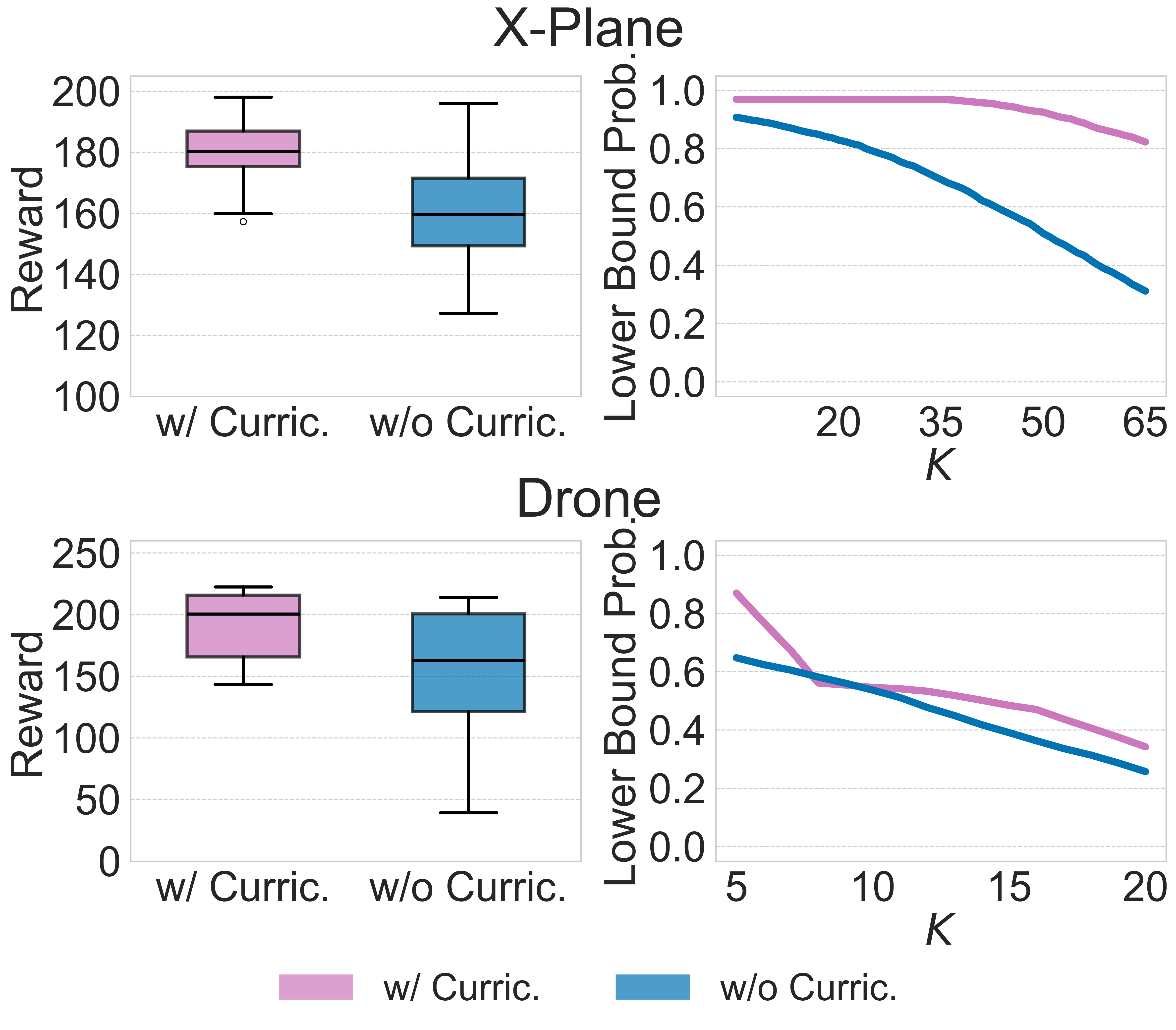}
    \caption{Ablation study on curriculum learning in X-Plane (top) and Drone (bottom) environments. Curriculum learning gradually increases verification horizon from $K=4$ to $K=10$ during training, compared to training directly on $K=10$. Left: controller performance over 100 episodes. Right: SPV verified safety probability lower bounds. Results show that curriculum learning improves both training stability and safety verification, particularly at larger horizons.}    
    \label{fig:ablation_curri}
\end{figure}

\subsection{Adaptive Training Data}
Our training procedure maintains a priority queue $S_A$ that adaptively focuses on challenging initial states where safety is difficult to satisfy (Algorithm~\ref{algo_main:sample}).
To evaluate the impact of this design choice, we compare \textsc{SPVT} with and without adaptive training data (ATD) across multiple environments.
Figure~\ref{fig:ablation_atd} shows results for X-Plane and Drone.
We observe that ATD significantly improves safety verification while maintaining comparable controller performance.

\subsection{Curriculum Learning}
We employ curriculum learning to gradually increase the verification horizon from $K_1=4$ to $K_n=10$ during training.
To evaluate this design choice, we compare \textsc{SPVT} with curriculum learning against a variant that trains directly on $K=10$ from the start.
Figure~\ref{fig:ablation_curri} shows results in X-Plane and Drone.
Training without curriculum learning yields lower rewards with higher variance, indicating unstable training.
While both approaches achieve reasonable safety at small horizons, curriculum learning maintains substantially higher safety probability bounds at larger $K$ values.

\end{document}